\newtheorem{lemma}{Lemma}
\newtheorem{proof}{Proof}
\newcommand{\x}{\mathbf{x}}
\newcommand{\y}{\mathbf{y}}
\newcommand{\z}{\mathbf{z}}
\newcommand{\f}{\mathbf{f}}
\newcommand{\W}{\bm{\mathbf{W}}}
\newcommand{\U}{\bm{\mathbf{U}}}
\newcommand{\A}{\bm{\mathbf{A}}}
\newcommand{\<}{\left\langle}
\renewcommand{\>}{\right\rangle}
\newcommand{\prox}{\mbox{prox}}
\newcommand{\argmin}{\mbox{argmin}}
\begin{document}
% \nipsfinalcopy is no longer used

\title{\textbf{Optimization Algorithm Inspired Deep Neural Network Structure Design}}

\author{Huan Li\footnote{Peking University. Email: lihuanss@pdu.edu.cn} \and
Yibo Yang\footnote{Peking University. Email: ibo@pku.edu.cn. H. Li and Y. Yang have equal contributions.} \and
Dongmin Chen\footnote{Peking University. Email: dongminchen@pku.edu.cn} \and
Zhouchen Lin\footnote{Peking University. Email: zlin@pku.edu.cn. Z. Lin is the corresponding author.}
}

\maketitle

\begin{abstract}
Deep neural networks have been one of the dominant machine learning approaches in recent years. Several new network structures are proposed and have better performance than the traditional feedforward neural network structure. Representative ones include the skip connection structure in ResNet and the dense connection structure in DenseNet. However, it still lacks a unified guidance for the neural network structure design. In this paper, we propose the hypothesis that the neural network structure design can be inspired by optimization algorithms and a faster optimization algorithm may lead to a better neural network structure. Specifically, we prove that the propagation in the feedforward neural network with the same linear transformation in different layers is equivalent to minimizing some function using the gradient descent algorithm. Based on this observation, we replace the gradient descent algorithm with the
heavy ball algorithm and Nesterov's accelerated gradient descent algorithm, which are faster and inspire us to design new and better network structures. ResNet and DenseNet can be considered as two special cases of our framework. Numerical experiments on CIFAR-10, CIFAR-100 and ImageNet verify the advantage of our optimization algorithm inspired structures over ResNet and DenseNet.
\end{abstract}

\section{Introduction}
Deep neural networks have become a powerful tool in machine learning and have achieved remarkable success in many computer vision and image processing tasks, including classification \cite{hinton2012}, semantic segmentation \cite{long2015} and object detection \cite{Girshick2015,Ren-2016}. After the breakthrough result in the ImageNet classification challenge \cite{hinton2012}, different kinds of neural network architectures have been proposed and the performance is improved year by year. GoogLeNet \cite{Szegedy2015} seems to achieve the bottleneck of performance in 2014 with the traditional feedforward neural network structure, where the units are connected only with the ones in the following layer. After that, new neural network structures have been proposed. Examples include ResNet \cite{he2015resbet}, DenseNet \cite{huang2016densely} and CliqueNet \cite{Yang_2018_CVPR}, where skip connections and dense connections are adopted to ease the network training and further push the state of the arts.

%where the former adds skip connections beyond the traditional connections and the latter connects each layer with all its following layers.

Although the new structures lead to a significant improvement compared with the traditional feedforward structure, it seems to require profound understandings of practical neural networks and substantial trials of experiments to design effective neural network structures. Thus we believe that the design of neural network structure needs a unified guidance. This paper serves as a preliminary trial towards this goal.
\subsection{Related Work}
There has been extensive work on the neural network structure design. Generic algorithm \cite{schaffer1992,lam2003} based approaches were proposed to find both architectures and weights in the early stage of neural network design. However, networks designed with the generic algorithm perform worse than the hand-crafted ones \cite{verbancsics2013}. \cite{saxena2016} proposed a ``Fabric'' to sidestep the CNN model architecture selection problem and it performs close to the hand-crafted networks. \cite{domhan2015} used Bayesian optimization for network architecture selection and \cite{bergstra2013} used a meta-modeling approach to choose the type of layers and hyper parameters. \cite{kwok1997}, \cite{ma2003} and \cite{cortes2017} used the adaptive strategy that grows the network structure layer by layer from a small network based on some principles, e.g., \cite{cortes2017} minimized some loss value to balance the model complexity and the empirical risk minimization. \cite{baker2016} and \cite{zoph} used the reinforcement learning to search the neural network architecture. All of these are basically heuristic search based approaches. They are difficult to produce effective neural networks if the computing power is insufficient or the search strategy is inefficient as the search space is huge. DNN structure designed via minimizing some loss values is only best for given data. It may not generalize to other datasets if no \emph{regular} structure exists in the network. Although some recently proposed methods that utilize a recurrent neural network and reinforcement learning scheme also achieve impressive results \cite{zoph,shangtang}, they differ from us due to the lack of explicit guidance to indicate where the connections should appear.
\subsection{Motivation}
In this paper, we design the neural network structures based on the inspiration from optimization algorithms. Our idea is motivated by the recent work in the compressive sensing community. Traditional methods for compressive sensing solve a well-defined problem $\min_{\x} \|\A\x-\y\|^2+\alpha\|\x\|_1$\footnote{We denote $\|\x\|=\sqrt{\sum_i\x_i^2}$ and $\|\x\|_1=\sum_i|\x_i|$.} and employ iterative algorithms to solve it, e.g., the ISTA algorithm \cite{Beck-2009-APG} with iterations $\x_{k+1}=\prox_{\frac{\alpha}{L}\|\cdot\|_1}(\x_k-\frac{1}{L}\A^T(\A\x_k-\y))$, where $\prox_{\alpha\|\cdot\|_1}(\x)=\argmin_{\z} \frac{1}{2}\|\z-\x\|^2+a\|\z\|_1$. The iterative algorithms often need many iterations to converge and thus suffer from high computational complexity. \cite{Gregor2010}, \cite{xinbo2016}, \cite{kulkarni2016}, \cite{zhang2017} and \cite{yang-2016-nips} developed a series of neural network based methods for compressive sensing. Their main idea is to train a non-linear feedforward neural network with a fixed depth. At each layer, a linear transformation is applied to the input $\x_k$ and then a nonlinear transformation follows, which can be described as $\x_{k+1}=\Phi(\W_k\x_k)$. In the traditional optimization based compressive sensing, the linear transformation $\A$ is fixed. As a comparison, in the neural network based compressive sensing, $\W_k$ is learnable so that each layer has a different linear transformation matrix. The neural network based compressive sensing often needs much less computation compared with the optimization based ones.

Since ISTA is almost the most popular algorithm for compressive sensing, most of the existing neural network based methods \cite{Gregor2010,xinbo2016,kulkarni2016} are inspired by ISTA and thus have the feedforward structure. \cite{zhang2017} proposed a FISTA-net \cite{Beck-2009-APG} by adding a skip connection to the feedforward structure. However, all these networks are for \emph{image reconstruction}, based on the compressive sensing model. The design methodology of deep neural networks for \emph{image recognition} tasks is still lacking.
\subsection{Contributions}\label{section_constritution}
In this paper, we study the design of the neural network structures for \emph{image recognition} tasks\footnote{We only focus on the part before  SoftMax as SoftMax will be connected to all networks in order to produce label information.}. To make our network structure easy to generalize to other datasets, our methodology separates the structure design and weights search, i.e., we do not consider the optimal weights in the structure design stage. The optimal weights will be searched via training after the structure design. Our methodology is inspired by optimization algorithms. Specifically, our contributions include:
\begin{enumerate}
\item For the standard feedforward neural network that shares the same linear transformation and nonlinear activation function at different layers, we prove that the propagation in the neural network is equivalent to using the gradient descent algorithm to minimize some function $F(\x)$. As a comparison, the neural network based compressed sensing only studied the soft thresholding as the nonlinear activation function and the goal of designing the network is to solve the compressive sensing problem as accurately as possible.
\item Based on the above observation, we propose the hypothesis that a faster optimization algorithm may inspire a better neural network structure. Especially, we give the neural network structures inspired by the heavy ball algorithm and Nesterov's accelerated gradient algorithm, which include ResNet and DenseNet as two special cases.
\item Numerical experiments on CIFAR-10, CIFAR-100 and ImageNet verify that the optimization algorithm inspired neural network structures outperform ResNet and DenseNet. These show that our methodology is very promising.
\end{enumerate}
Our methodology is still preliminary. Although we have shown in some degree the connection between faster optimization algorithms and better deep neural networks, currently we haven't revealed the connection between optimization algorithm and DNN structure design in a theoretically rigorous way. It is an analogy. However, analogy does \emph{not} mean unsolid. For example, DNN is inspired by brain. It is also an analogy and has no strict connections to brain either. However, no one can say that DNN is insignificant or ineffective. %Our methodology can also serve as the start point of network design. Practitioners can easily make changes to optimization algorithm inspired networks out of various insights and integrate various engineering tricks to produce even better results. Such a practice should be much easier than designing from scratch.
\section{Reviews of Some Optimization Algorithms}
In this section, we review the gradient descent (GD) algorithm \cite{Bertsekas-book}, the heavy ball (HB) algorithm \cite{polyak1964}, Nesterov's accelerated gradient descent (AGD) algorithm \cite{Nesterov1983} and the Alternating Direction Method of Multipliers (ADMM) \cite{Gabay-1983-DR,lin-2011} to solve the general optimization problem $\min_{\z} f(\z)$.

The gradient descent algorithm is one of the most popular algorithms in practice. It consists of the following iteration\footnote{For direct use in our network design, we fix the stepsize to 1. It can be obtained by scaling the objective function $f(\z)$ such that the Lipschitz constant of $\nabla f(\z)$ is 1.}:
\begin{eqnarray}\label{GD}
\z_{k+1}=\z_k- \nabla f(\z_k).
\end{eqnarray}
The heavy ball algorithm is a variant of the gradient descent algorithm, where a momentum is added after the gradient descent step:
\begin{eqnarray}\label{HB}
\z_{k+1}=\z_k- \nabla f(\z_k)+\beta(\z_k-\z_{k-1}).
\end{eqnarray}
Nesterov's accelerated gradient algorithm has the similar idea with the heavy ball algorithm, but it uses the momentum in another way:
\begin{eqnarray}
\begin{aligned}\label{AGD1}
&\y_k=\z_k+\frac{\theta_k(1-\theta_{k-1})}{\theta_{k-1}}(\z_k-\z_{k-1}),\\
&\z_{k+1}=\y_k- \nabla f(\y_k).
\end{aligned}
\end{eqnarray}
where $\theta_{k}$ is computed via $\frac{1-\theta_{k}}{\theta_{k}^2}=\frac{1}{\theta_{k-1}^2}$ and $\theta_0=1$\footnote{When $f(\z)$ is $\mu$-strongly convex and its gradient is $L$-Lipschitz continuous, $\frac{\theta_k(1-\theta_{k-1})}{\theta_{k-1}}$ is fixed at $\frac{\sqrt{L}-\sqrt{\mu}}{\sqrt{L}+\sqrt{\mu}}$.}. When $f(\z)$ is $\mu$-strongly convex\footnote{I.e., $f(\y)\geq f(\x)+\<\nabla f(\x),\y-\x\>+\frac{\mu}{2}\|\y-\x\|^2$.} and its gradient is $L$-Lipschitz continuous\footnote{I.e., $\|\nabla f(\y)-\nabla f(\x)\|\leq L\|\y-\x\|$.}, the heavy ball algorithm and Nesterov's accelerated gradient algorithm can find an $\epsilon$-accuracy solution in $O\left(\sqrt{\frac{L}{\mu}}\log\frac{1}{\epsilon}\right)$ iterations, while the gradient descent algorithm needs $O\left(\frac{L}{\mu}\log\frac{1}{\epsilon}\right)$ iterations. Iteration (\ref{AGD1}) has an equivalent form of
\begin{eqnarray}\label{AGD2}
\y_{k+1}=\y_k-\sum_{j=0}^k h_{k+1,j}\nabla f(\y_j),
\end{eqnarray}
where
\begin{eqnarray}\label{h_AGD2}
h_{k+1,j}=\left\{
  \begin{array}{ll}
    \frac{\theta_{k+1}(1-\theta_k)}{\theta_k}h_{k,j}, & j=0,\cdots,k-2,\\
    \frac{\theta_{k+1}(1-\theta_k)}{\theta_k}(h_{k,k-1}-1), & j=k-1,\\
    1+\frac{\theta_{k+1}(1-\theta_k)}{\theta_k}, & j=k.
  \end{array}
\right.
\end{eqnarray}
ADMM and its linearized version can also be used to minimize $f(\z)$ by reformulating it as $\min_{\y,\z} f(\z)+f(\y),\quad s.t.\quad\y-\z=0$. Linearized ADMM consists the following steps\footnote{For direct use in our network design, we fix the penalty parameter to 1.}:
\begin{eqnarray}
\begin{aligned}\label{ADMM}
&\z_{k+1}=\argmin_{\z} \<\nabla f(\z_k),\z\>+\frac{1}{2}\|\z-\z_k\|^2+\<\lambda_k,\z\>+\frac{1}{2}\|\z-\y_k\|^2,\\
&\y_{k+1}=\argmin_{\y} \<\nabla f(\y_k),\y\>+\frac{1}{2}\|\y-\y_k\|^2-\<\lambda_k,\y\>+\frac{1}{2}\|\z_{k+1}-\y\|^2,\\
&\lambda_{k+1}=\lambda_k+(\z_{k+1}-\y_{k+1}).
\end{aligned}
\end{eqnarray}
\section{Modeling the Propagation in Feedforward Neural Network}\label{section4}
In the standard feedforward neural network, the propagation from the first layer to the last layer can be expressed as:
\begin{eqnarray}
\x_{k+1}=\Phi(\W_k\x_k),\label{information_flow}
\end{eqnarray}
where $\x_k$ is the output of the $k$-th layer, $\Phi$ is the activation function such as the sigmoid and ReLU, and $\W_k$ is a linear transformation. As claimed in Section \ref{section_constritution}, we do not consider the optimal weights during the structure design stage. Thus, we fix the matrix $\W_k$ as $\W$ to simplify the analysis.

We want to relate (\ref{information_flow}) with the gradient descent procedure (\ref{GD}). The critical step is to find an objective $F(\x)$ to minimize.
\begin{lemma}\label{F_lemma}
Suppose $\W$ is a symmetric and positive definite matrix\footnote{This assumption is just for building the connection between network design and optimization algorithms. $\W$ will be learnt from data once the structure of network is fixed.}. Let $\U=\sqrt{\W}$. Then there exists a function $f(\x)$ such that (\ref{information_flow}) is equivalent to minimizing $F(\x)=f(\U\x)$ using the following steps:
\begin{enumerate}
\item Define a new variable $\z=\U\x$,
\item Using (\ref{GD}) to minimize $f(\z)$,
\item Recovering $\x_0,\x_1,\cdots,\x_k$ from $\z_0,\z_1,\cdots,\z_k$ via $\x=\U^{-1}\z$.
\end{enumerate}
\end{lemma}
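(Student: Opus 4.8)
The plan is to reverse-engineer the objective $f$ from the one-step update, exploiting the change of variables $\z = \U\x$ with $\U = \sqrt{\W}$. Under this substitution, the feedforward recursion $\x_{k+1} = \Phi(\W\x_k)$ becomes $\U^{-1}\z_{k+1} = \Phi(\U^2 \U^{-1}\z_k) = \Phi(\U\z_k)$, i.e.\ $\z_{k+1} = \U\,\Phi(\U\z_k)$. So the task reduces to exhibiting a function $f$ whose gradient descent step $\z_{k+1} = \z_k - \nabla f(\z_k)$ coincides with $\z_{k+1} = \U\,\Phi(\U\z_k)$. Matching the two expressions, I need $\nabla f(\z) = \z - \U\,\Phi(\U\z)$ for all $\z$.

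The key step is to verify that the vector field $g(\z) := \z - \U\,\Phi(\U\z)$ is actually a gradient field, and then integrate it. First I would write $\Phi$ as acting coordinatewise through a scalar activation $\phi$, and introduce its antiderivative $\varphi$ with $\varphi' = \phi$. Then I claim $f(\z) = \tfrac12\|\z\|^2 - \mathbf{1}^T \varphi(\U\z)$ works, where $\varphi$ is applied coordinatewise: differentiating gives $\nabla f(\z) = \z - \U^T \phi(\U\z) = \z - \U\,\Phi(\U\z)$, using $\U = \U^T$ since $\W$ (hence $\sqrt{\W}$) is symmetric. Substituting back $\z = \U\x$ and $F(\x) = f(\U\x)$ then yields $F(\x) = \tfrac12 \x^T\W\x - \mathbf{1}^T\varphi(\W\x)$, and the gradient descent iteration on $f$, pulled back through $\x = \U^{-1}\z$, reproduces exactly (\ref{information_flow}). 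I would also note that the stepsize normalization to $1$ is consistent with the convention in the footnote after (\ref{GD}), obtained by scaling so that $\nabla f$ is $1$-Lipschitz.

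The main obstacle is the integrability/well-definedness of $\varphi$: for $\Phi$ a general activation this is automatic since any continuous scalar function has an antiderivative, but one should check that the coordinatewise construction genuinely gives $\nabla\big(\mathbf{1}^T\varphi(\U\z)\big) = \U^T\phi(\U\z)$ — this is just the chain rule applied to the separable function $\sum_i \varphi((\U\z)_i)$, so it is routine. A secondary point worth addressing is that $\U = \sqrt{\W}$ is well-defined and symmetric positive definite precisely because $\W$ is assumed symmetric positive definite, and that $\U^{-1}$ exists so the recovery step (3) is valid; this is where the hypothesis on $\W$ is used, and it is the only place it is needed. Everything else is bookkeeping: substitute, differentiate, and match the iterations term by term.
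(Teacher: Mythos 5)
Your proposal is correct and follows essentially the same route as the paper: you construct the same potential $f(\z)=\tfrac{1}{2}\|\z\|^2-\sum_i\Psi(\U_i^T\z)$ (your $\mathbf{1}^T\varphi(\U\z)$ with $\varphi'=\phi$), verify $\nabla f(\z)=\z-\U\Phi(\U\z)$ using the symmetry of $\U$, and pull the gradient descent step back through $\x=\U^{-1}\z$ to recover $\x_{k+1}=\Phi(\W\x_k)$. The only difference is cosmetic (your explicit remarks on integrability and on where positive definiteness of $\W$ is used), so no further changes are needed.
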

\begin{proof}
We can find $\Psi(z)$ such that $\Psi'(z)=\Phi(z)$ for the commonly used activation function $\Phi(z)$. Then we can have that $\nabla_{\z} \sum_i \Psi(\U_{i}^T\z)=\U\Phi(\U^T\z)=\U\Phi(\U\z)$. So if we let
\begin{eqnarray}
f(\z)=\frac{\|\z\|^2}{2}-\sum_i \Psi(\U_{i}^T\z),\label{objective_function}
\end{eqnarray}
where $\U_i$ is the $i$-th column of $\U$, then we have
\begin{eqnarray}
\nabla f(\z_k)=\z_k-\U\Phi(\U\z_k).\label{eq2}
\end{eqnarray}
Using (\ref{GD}) to minimize (\ref{objective_function}), we have
\begin{eqnarray}
\z_{k+1}=\z_k-\nabla f(\z_k)=\U\Phi(\U\z_k).\label{eq3}
\end{eqnarray}
Now we define a new function
\begin{eqnarray}
F(\x)=f(\U\x).\label{objective_function2}\notag
\end{eqnarray}
Let $\z=\U\x$ for variable substitution and minimize $f(\z)$ to obtain a sequence of $\z_0,\z_1,\cdots,\z_k$. Then we use this sequence to recover $\x$ by $\x=\U^{-1}\z$, which leads to
\begin{eqnarray}
\x_{k+1}=\U^{-1}\z_{k+1}=\U^{-1}\U\Phi(\U\z_k)=\Phi(\U\z_k)=\Phi(\U^2\x_k)=\Phi(\W\x_k).\notag
\end{eqnarray}
\end{proof}
We list the objective function $f(\x)$ for the commonly used activation functions in Table \ref{integral_table}.

\begin{table*}\label{integral_table}
\begin{center}
\footnotesize
\begin{tabular}{c|c|c}
\hline\hline
& Activation function & Optimization objective $f(\x)$\\\hline
Sigmoid & $\frac{1}{1+e^{-x}}$ & $\frac{\|\x\|^2}{2}-\sum_i\left[\U_{i}^T\x+\log\left(\frac{1}{e^{\U_{i}^T\x}}+1\right)\right]$\\
tanh    & $\frac{1-e^{-2x}}{1+e^{-2x}}$ & $\frac{\|\x\|^2}{2}-\sum_i\left[\U_{i}^T\x+\log\left(\frac{1}{e^{2\U_{i}^T\x}}+1\right)\right]$\\
Softplus& $\log(e^x+1)$ & $\frac{\|\x\|^2}{2}-\sum_i\left[C-\mbox{polylog}(2,-e^{\U_{i}^T\x})\right]$\\
Softsign& $\frac{x}{1+|x|}$ & $\frac{\|\x\|^2}{2}-\sum_i\phi_i(\x)$, where $\phi_i(\x)=\left\{
  \begin{array}{ll}
    \U_{i}^T\x-\log(\U_{i}^T\x+1), & \mbox{ if }\U_{i}^T\x>0,\\
    -\U_{i}^T\x-\log(\U_{i}^T\x-1), & \mbox{ otherwise }\\
  \end{array}
\right.$ \\
ReLU    & $\left\{
  \begin{array}{ll}
    x, & \mbox{ if }x>0,\\
    0, & \mbox{ if }x\leq 0.\\
  \end{array}
\right.$ & $\frac{\|\x\|^2}{2}-\sum_i\phi_i(\x)$, where $\phi_i(\x)=\left\{
  \begin{array}{ll}
    \frac{(\U_{i}^T\x)^2}{2}, & \mbox{ if }\U_{i}^T\x>0,\\
    0, & \mbox{ otherwise }\\
  \end{array}
\right.$\\
Leaky ReLU & $\left\{
  \begin{array}{ll}
    x, & \mbox{ if }x>0,\\
    \alpha x, & \mbox{ if }x\leq 0.\\
  \end{array}
\right.$ & $\frac{\|\x\|^2}{2}-\sum_i\phi_i(\x)$, where $\phi_i(\x)=\left\{
  \begin{array}{ll}
    \frac{(\U_{i}^T\x)^2}{2}, & \mbox{ if }\U_{i}^T\x>0,\\
    \frac{\alpha (\U_{i}^T\x)^2}{2}, & \mbox{ otherwise }\\
  \end{array}
\right.$\\
ELU & $\left\{
  \begin{array}{ll}
    x, & \mbox{ if }x>0,\\
    a(e^{x}-1), & \mbox{ if }x\leq 0.\\
  \end{array}
\right.$ & $\frac{\|\x\|^2}{2}-\sum_i\phi_i(\x)$, where $\phi_i(\x)=\left\{
  \begin{array}{ll}
    \frac{(\U_{i}^T\x)^2}{2}, & \mbox{ if }\U_{i}^T\x>0,\\
    a(e^{\U_{i}^T\x}-\U_{i}^T\x), & \mbox{ otherwise }\\
  \end{array}
\right.$\\
Swish & $\frac{x}{1+e^{-x}}$ & $\frac{\|\x\|^2}{2}-\sum_i\left[\frac{(\U_{i}^T\x)^2}{2}+\U_{i}^T\x\log\left(\frac{1}{e^{\U_{i}^T\x}}+1\right)-\mbox{polylog}\left(2,-\frac{1}{e^{\U_{i}^T\x}}\right)\right]$\\
\hline\hline
\end{tabular}
\end{center}
\caption{The optimization objectives for the common activation functions.}
\end{table*}

\section{From GD to Other Optimization Algorithms}\label{section5}
As shown in Section \ref{section4}, the propagation in the general feedfroward neural network can be seen as using the gradient descent algorithm to minimize some function $F(\x)$. In this section, we consider to use other algorithms to minimize the same function $F(\x)$.

$\mathbf{The}$ $\mathbf{Heavy}$ $\mathbf{Ball}$ $\mathbf{Algorithm}$. We first consider iteration (\ref{HB}). Similar to the proof in Section \ref{section4}, we use the following three steps to minimize $F(\x)=f(\U\x)$:
\begin{enumerate}
\item Variable substitution $\z=\U\x$.
\item Using (\ref{HB}) to minimize $f(\z)$, which is defined in (\ref{objective_function}). Then (\ref{HB}) becomes
\begin{eqnarray}
\z_{k+1}=\z_k-\nabla f(\z_k)+\beta(\z_k-\z_{k-1})=\U\Phi(\U\z_k)+\beta(\z_k-\z_{k-1}),\notag
\end{eqnarray}
where we use (\ref{eq2}) in the second equation.
\item Recovering $\x$ from $\z$ via $\x=\U^{-1}\z$:
\begin{eqnarray}\label{HB_network}
\begin{aligned}
\x_{k+1}=&\U^{-1}\z_{k+1}=\Phi(\U\z_k)+\beta(\U^{-1}\z_k-\U^{-1}\z_{k-1})\\
=&\Phi(\U^2\x_k)+\beta(\x_k-\x_{k-1})=\Phi(\W\x_k)+\beta(\x_k-\x_{k-1}).
\end{aligned}
\end{eqnarray}
\end{enumerate}
$\mathbf{Nesterov's}$ $\mathbf{Accelerated}$ $\mathbf{Gradient}$ $\mathbf{Descent}$ $\mathbf{Algorithm}$. Now we consider to use iteration (\ref{AGD1}). Following the same three steps, we have:
\begin{eqnarray}\label{AGD_network}
\x_{k+1}=\Phi\left(\W\left(\x_k+\beta_k(\x_k-\x_{k-1})\right)\right),
\end{eqnarray}
where $\beta_k=\frac{\theta_k(1-\theta_{k-1})}{\theta_{k-1}}$. Then we use iteration (\ref{AGD2}) to minimize $F(\x)$. Following the same three steps, we have
\begin{eqnarray}\label{AGD2_network}
\x_{k+1}=\sum_{j=0}^k h_{k+1,j}\Phi(\W\x_j)+\x_k-\sum_{j=0}^k h_{k+1,j}\x_j.
\end{eqnarray}

$\mathbf{ADMM}$. At last, we use iteration (\ref{ADMM}) to minimize $F(\x)$, which consists of the following steps:
\begin{eqnarray}
\begin{aligned}\label{ADMM_network}
&\x_{k+1}'=\frac{1}{2}\left(\Phi(\W\x_k')+\x_k-\sum_{t=1}^k(\x_t'-\x_t)\right),\\
&\x_{k+1}=\frac{1}{2}\left(\Phi(\W\x_k)+\x_{k+1}'+\sum_{t=1}^k(\x_t'-\x_t)\right).
\end{aligned}
\end{eqnarray}
Comparing (\ref{HB_network}), (\ref{AGD_network}) (\ref{AGD2_network}) and (\ref{ADMM_network}) with (\ref{information_flow}), we can see that the new algorithms keep the basic $\Phi(\W\x)$ operation but use some additional side paths, which inspires the new network structure design in Section \ref{section7}.
\section{Hypothesis: Faster Optimization Algorithm May Imply Better Network}\label{section6}
In this section, we consider the general representation learning task: Given a set of data points $\{\{\x_0^i,l_i\}:i=1,\cdots,m\}$, where $\x_0^i$ is the $i$-th data and $l_i$ is its label, we want to find a deep neural network which can learn the best feature $\f_i$ for each $\x_0^i$, which exists in theory but actually unknown in reality, such that $\f_i$ can perfectly predict $l_i$. For simplicity, we assume that $\x_0^i$ and $\f_i$ have the same dimension. As clarified at the beginning of Section \ref{section_constritution}, in this paper, we only consider the learning model form $\x_0^i$ to $\f_i$ and do not consider the prediction model from $\f_i$ to $l_i$. We do not consider the optimal weights during the structure design stage and thus, we study a simplified neural network model with the same linear transformation $\W\x$ in different layers. Actually, this corresponds to the recurrent neural networks \cite{putzky}. In this section we use $\{\x_0,\f\}$ instead of $\{\x_0^i,\f_i\}$ for simplicity.

\subsection{Same Linear Transformation in Different Layers}\label{section5.1}
We first consider the simplified neural network model with the same linear transformation $\W\x$ in different layers. As stated in Section \ref{section4}, the propagation in the standard feedforward network can be seen as using the gradient descent algorithm to minimize some function $F(\x)$. Assume that there exists a special function $F(\x)$ with some parameter $\U$ dependent on $\{\x_0,\f\}$ such that $\f=\argmin_{\x}F(\x)$. Now we use different algorithms to minimize this $F(\x)$ and we want to find the minimizer of $F(\x)$ via as few iterations as possible.

When we use the gradient descent algorithm to minimize this $F(\x)$ with initializer $\x_0$, the iterative procedure is equivalent to (\ref{information_flow}), which corresponds to the propagation in the feedforward neural network characterized by the parameter $\U$ discussed above. Let $\hat\f$ be the output of this feedforward neural network. As is known, the gradient descent algorithm needs $O\left(\frac{L}{\mu}\log\frac{1}{\epsilon}\right)$ iterations to reach an $\epsilon$-accuracy solution, i.e., $\|\hat\f-\f\|\leq\epsilon$. In other words, this feedforward neural network needs $O\left(\frac{L}{\mu}\log\frac{1}{\epsilon}\right)$ layers for an $\epsilon$-accuracy prediction.

When we use some faster algorithm to minimize this $F(\x)$, e.g., the heavy ball algorithm and Nesterov's accelerated gradient algorithm, their iterative procedures are equivalent to (\ref{HB_network}) and (\ref{AGD2_network}), respectively and the new algorithms will need $O\left(\sqrt{\frac{L}{\mu}}\log\frac{1}{\epsilon}\right)$ iterations for $\|\hat\f-\f\|\leq\epsilon$. That is, the networks corresponding to faster algorithms (also characterized by the same $\U$ but has different structures) will need fewer layers than the feedforward neural network discussed above.

We define a network with fewer layers to reach the same approximation accuracy as a better network. As is known, training a deep neural network model is a nonconvex optimization problem and it is NP-hard to reach its global minima. The training precess becomes more difficult when the network becomes deeper. So if we can find a network with fewer layers and no loss of approximation accuracy, it will make the training process much easier.
\subsection{Different Linear Transformation in Different Layers}\label{section5.2}
In the previous discussion, we require the linear transformation in different layers to be the same. This is not true except in recurrent neural networks and is only for theoretical explanation. Now we allow each layer to have a different transformation.

Assume that we have a network that is inspired by using some optimization algorithm to minimize $F(\x)$, where its $k$-th layer has the operation of (\ref{information_flow}), (\ref{HB_network}), (\ref{AGD_network}), (\ref{AGD2_network}) or (\ref{ADMM_network}). Denote its final output as $\mbox{Net}_{\U}(\x_0)$. Then for a network with finite layers, we have $\|\f-\mbox{Net}_{\U}(\x_0)\|\leq\epsilon$.

Now we relax the parameter $\U$ to be different in different layers. Then the output can be rewritten as $\mbox{Net}_{\U_1,\cdots,\U_n}(\x)$. Now we can use the following model to learn the parameters $\U_1,\cdots,\U_n$ with a fixed network structure:
\begin{eqnarray}
\min_{\U_1,\cdots,\U_n} \|\f-\mbox{Net}_{\U_1,\cdots,\U_n}(\x_0)\|^2\label{training_model}
\end{eqnarray}
and denote $\U_1^*,\cdots,\U_n^*$ as the solution. Then we have $\|\f-\mbox{Net}_{\U_1^*,\cdots,\U_n^*}(\x_0)\|\leq \|\f-\mbox{Net}_{\U}(\x_0)\|$, which means that different linear transformation in different layers will not make the network worse. In fact, model (\ref{training_model}) is the general training model for a neural network with a fixed structure.

\subsection{Simulation Experiment}
\begin{table}
%\scriptsize
\begin{center}
\begin{tabular}{c|c|c|c|c|c}
\hline
depth & ADMM (\ref{ADMM_network}) & GD (\ref{information_flow}) & HB (\ref{HB_network}) & AGD (\ref{AGD_network}) & AGD2 (\ref{AGD2_network}) \\
\hline
10 & 1.07576& 1.00644 & 1.00443 & 1.00270 & 1.00745\\
\hline
20 & 1.07495& 1.00679 & 1.00449 & 1.00215 & 1.00227\\
\hline
30 & 1.07665& 1.00652 & 1.00455 & 1.00204 & 1.00086\\
\hline
40 & 1.07749& 1.00653 & 1.00457 & 1.00213 & 0.99964\\
\hline
\end{tabular}
\caption{MSE comparisons of different optimization algorithm inspired neural network structures.}\label{table_sm}
\end{center}
\end{table}
In this section, we verify our hypothesis that the network structures inspired by faster algorithms may be better than the ones inspired by slower algorithms.

We compare five neural network structures, which are inspired by the gradient descent algorithm, the heavy ball algorithm, ADMM and two variants of Nesterov's accelerated gradient algorithm, which have the operation of (\ref{information_flow}), (\ref{HB_network}), (\ref{ADMM_network}), (\ref{AGD_network}) and (\ref{AGD2_network}) at each layer, respectively. We set $\beta=0.3$ for (\ref{HB_network}) and the parameters of (\ref{AGD_network}) and (\ref{AGD2_network}) are exactly the same with their corresponding optimization algorithms. We use the sigmoid function for $\Phi$ and $\W\x$ is a full-connection linear transformation. Then we use model (\ref{training_model})\footnote{The optimization algorithms we present in Section 2 are not related to what algorithm we use to train model (\ref{training_model}). The algorithms in Section 2 are for designing network structures.} to train the parameters of each layer under the fixed network structures. We generate 10,000 random pairs of $\{\x_0^i,\f_i\}$ in $N(\textbf{0},\textbf{I})$ as the training data. Each $\x_0^i$ and $\f_i$ has a dimension of 100. We use $\x_0^i$ as the input of the network and use its output to fit $\f_i$. We report the Mean Squared Error (MSE) loss value of the five aforementioned models with different depths after training 1,000 epoches.

Table \ref{table_sm} shows the experimental results. We can see that HB, AGD and AGD2 inspired neural network structures perform better than GD inspired network structure. This corresponds to the fact that the HB algorithm and AGD algorithm have a better theoretical convergence rate than the GD algorithm. The ADMM inspired network performs the worst. In fact, although ADMM has been widely used in practice, it does not have a faster theoretical convergence rate than GD. We also observe that the MSEs of GD, HB and AGD inspired network will not always decrease when the depth increases. This means that the deeper networks with GD, HB and AGD inspired structures are harder to train. However, the AGD2 inspired networks can still be efficiently trained with a larger depth when GD, HB and AGD inspired structures fail. This AGD2 is better may be because it has better numerical stability, although it is theoretically equivalent to AGD if there is no numerical error. Such a phenomenon is yet to be further explored.

\section{Engineering Implementation}\label{section7}
In the above section, we hypothesize and verify that faster optimization algorithm inspired neural network structure may need fewer layers without accuracy loss. In this section, we consider the practical implementations in engineering. Specifically, we consider the network structures inspired by algorithm iterations (\ref{information_flow}), (\ref{HB_network}), (\ref{AGD_network}), (\ref{AGD2_network}) and (\ref{ADMM_network}).

We define the following three meta operations for practical implementation.

$\mathbf{Relax}$ $\Phi$ $\mathbf{and}$ $\mathbf{W}$. We use $\W\x$ as the linear transformation with full-connection in Section \ref{section5}, which is the product of a matrix $\W$ and a vector $\x$. We may relax it to the convolution operation, which is also a linear transformation. Moreover, different layers may have different weight matrix $\W$ and $\W$ may not be a square matrix, thus the dimensions of input and output may be different. $\Phi$ is the nonlinear transformation defined by the activation function. It can also be relaxed to pooling and batch normalization (BN). Moreover, $\Phi(\cdot)$ can be a composite of nonlinear activation, pooling, BN, convolution or full-connection linear transformation. Using the different combinations of these operations, the network structure (\ref{information_flow}) covers many famous CNNs, e.g., LeNet \cite{lecun1998} and VGG \cite{simonyan-2015}. The activation function can also be different for different layers, e.g., be learnable \cite{yan-2016}.

In the following discussions, we replace $\Phi(\W\x)$ with the operator $T(\x)$ for more flexibility.

$\mathbf{Adaptive}$ $\mathbf{Coefficients}$. Inspired by (\ref{HB_network}), (\ref{AGD_network}) and (\ref{AGD2_network}), we can design some practical neural network structures. However, the coefficients in these formulations may be impractical.So we keep the structure inspired by these formulations but allow the coefficient to have other values or even be learnable. Specifically, we rewrite (\ref{HB_network}) as
\begin{eqnarray}\label{HB_network2}
\begin{aligned}
\x_{k+1}=T(\x_k)+\beta_1\x_k+\beta_2\x_{k-1},
\end{aligned}
\end{eqnarray}
where $\beta_1$ and $\beta_2$ can be set as any constants, e.g., 0, which means that we drop the corresponding term. It can also be co-optimized with the training of network's weights.

The structure of (\ref{HB_network2}) is illustrated in Figure \ref{structure}(a). The symbol $\bigoplus$ means that $\x_{k+1}$ is a combination of $T(\x_k)$, $\x_k$ and $\x_{k-1}$.

Now we consider (\ref{AGD_network}) and (\ref{AGD2_network}). Rewrite them as
\begin{eqnarray}\label{AGD_network2}
\begin{aligned}
\x_{k+1}=T(\beta_1\x_k+\beta_2\x_{k-1})
\end{aligned}
\end{eqnarray}
and
\begin{eqnarray}\label{AGD2_network2}
\begin{aligned}
\x_{k+1}=\sum_{j=0}^k \alpha_{k+1}^jT(\x_j)+\sum_{j=0}^k \beta_{k+1}^j\x_j.
\end{aligned}
\end{eqnarray}
All the coefficients $\alpha$ and $\beta$ can be set as any constants, e.g., 0 or following (\ref{AGD1}) or (\ref{h_AGD2}). They can also be co-trained with the weights of the network.

We demonstrate the structures of (\ref{AGD_network2}) and (\ref{AGD2_network2}) in Figure \ref{structure}(b) and \ref{structure}(c). From Figure \ref{structure}(b) we can see that it first makes a combination of $\x_k$ and $\x_{k-1}$ is then the operation $T$ follows. In Figure \ref{structure}(c), $\x_{k+1}$ combines all of $T(\x_1),\cdots,T(\x_k)$ and $\x_1,\cdots,\x_k$.

It is known that ResNet adds an additional skip connection that bypasses the non-linear transformations with an identity transformation:
\begin{eqnarray}
\x_{k+1}=T(\x_k)+\x_k.\notag
\end{eqnarray}
The structure of ResNet can be recovered from (\ref{HB_network2}) by setting $\beta_2=0$. If we also set $\beta_2=0$ in (\ref{AGD_network2}), then it is similar to the structure of ResNet. The difference is that (\ref{HB_network2}) performs the operation $T$ before adding the skip connection while (\ref{AGD_network2}) do it after the skip connection.

DenseNet is an extension of ResNet, which connects each layer with all its following layers. Consequently, the $k$-th layer receives the feature-maps of all preceding layers as its input, and produces
\begin{eqnarray}
\z_{k+1}=T([\z_0,\z_1,\cdots,\z_k]),\label{densenet}
\end{eqnarray}
where $[$ $]$ refers to the concatenating operation. (\ref{AGD2_network2}) recovers the structure of DenseNet by setting $\beta_{k}^j=0,\forall k,j$.

We can also rewrite (\ref{ADMM_network}) as
\begin{eqnarray}
\begin{aligned}\label{ADMM_network2}
\x_{k+1}'=&T(\x_k')+\sum_{t=1}^k\alpha_t\x_t'+\sum_{t=1}^k\beta_t\x_t,\\
\x_{k+1}=&T(\x_k)+\sum_{t=1}^k\alpha_t\x_t'+\sum_{t=1}^k\beta_t\x_t.
\end{aligned}
\end{eqnarray}
Figure \ref{structure}(d) demonstrates the structure of (\ref{ADMM_network2}). From the figure we can see that two paths interact with each other.
\begin{figure}
\begin{center}
    \includegraphics[width=0.9\linewidth]{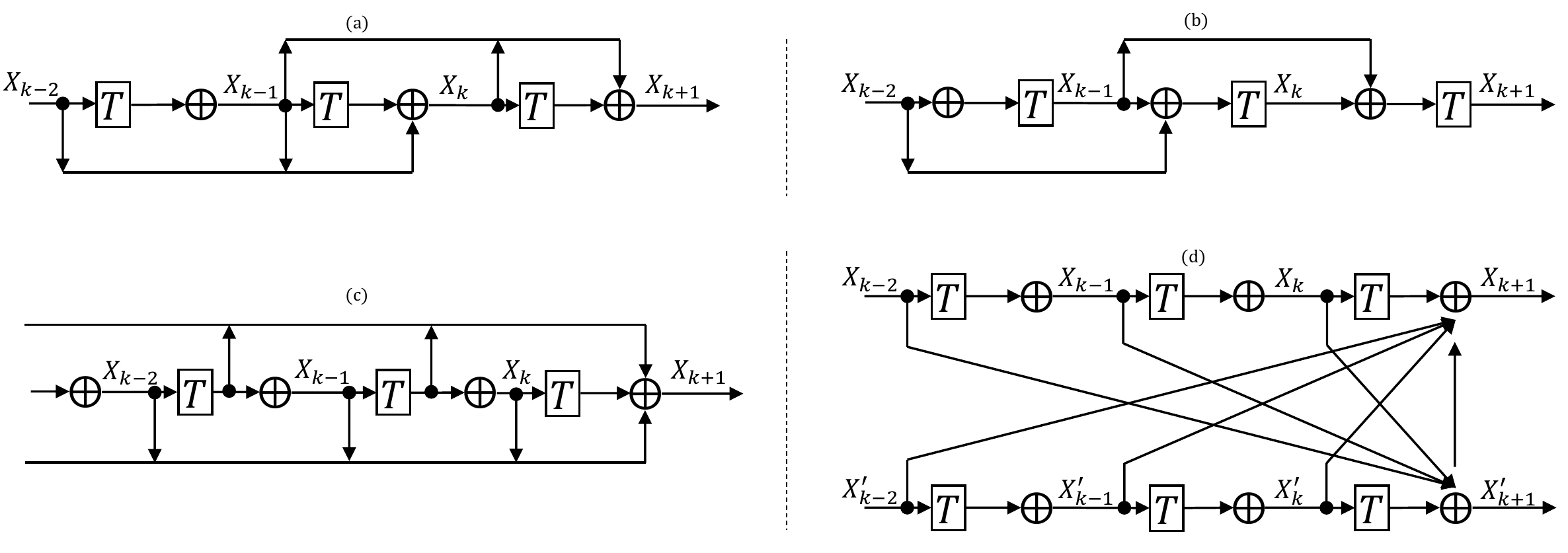}
\caption{Demonstrations of network structures (\ref{HB_network2}), (\ref{AGD_network2}), (\ref{AGD2_network2}) and (\ref{ADMM_network2}) in (a), (b), (c) and (d), respectively.}\label{structure}
\end{center}
\end{figure}
Different from the previous network structures, the ADMM inspired network has two parallel paths, which makes the network wider. It also recovers the DMRNet in \cite{wang2017}, which can be expressed as
\begin{eqnarray}
\begin{aligned}\notag
\x_{k+1}'=&T(\x_k')+\x_k'/2+\x_k/2,\notag\\
\x_{k+1}=&T(\x_k)+\x_k'/2+\x_k/2.\notag
\end{aligned}
\end{eqnarray}
We list the optimization algorithms that relate to the commonly used existing network structures in Table \ref{table_opt_stu}.

$\mathbf{Block}$ $\mathbf{Based}$ $\mathbf{Structure}$. (\ref{HB_network2}), (\ref{AGD_network2}) and (\ref{AGD2_network2}) are not available when the size of feature-maps changes, especially when down-sampling is used. To facilitate down-sampling, we can divide the network into multiple connected blocks. The formulationtions (\ref{HB_network2}), (\ref{AGD_network2}) and (\ref{AGD2_network2}) are used in each block. Such an operation is used in both ResNet and DenseNet.
\begin{table}
\begin{center}
\begin{tabular}{c|c|c}
\hline\hline
Algorithm & Network Structure & Transforming Setting\\
\hline
GD (\ref{GD}) & CNN & $\W\x\rightarrow$convolution\\
HB (\ref{HB}) & ResNet & $\beta_2=0$ in (\ref{HB_network2})\\
AGD (\ref{AGD2}) & DenseNet & $\beta=0$, $\alpha=1$ in (\ref{AGD2_network2})\\
ADMM (\ref{ADMM}) & DMRNet & $\alpha_k=\beta_k=\frac{1}{2}$ in (\ref{ADMM_network2})\\
\hline\hline
\end{tabular}
\end{center}
\caption{Optimization algorithms and their inspired network structures.}\label{table_opt_stu}
\end{table}
In the following sections, as examples we will give the explicit implementation of HB inspired network structure (\ref{HB_network2}) and AGD inspired network structure (\ref{AGD2_network2}).

\subsection{HB Inspired Network}\label{sec_HB_Resnet}
In this section, we describe the practical implementation of the neural network structure inspired by the heavy ball algorithm (\ref{HB}), %\footnote{We have also tried the structure inspired by the first version of AGD (\ref{AGD_network2}), but the performance is worse than ResNet. The reason may be that ResNet sums the identity transfer and the output of $T$, while (\ref{AGD_network2}) firstly sums the feature-maps of $\x_k$ and $\x_{k-1}$ and then performs the $T$ operation. ResNet makes the information easier to bypass from the beginning to the end via the identity mappings.},
which is used in our experiments. Specifically, we implement the HB inspired network by setting $\beta=1$ directly in (\ref{HB}):
\begin{equation}
\x_{k+1}=T(\x_k)+\x_k-\x_{k-1},\notag
\end{equation}
here $T$ is a composite function including two weight layers. According to the residual structure in ResNet, the first layer is composed of three consecutive operations: convolution, batch normalization, and ReLU, while the second one is performed only with convolution and batch normalization. The feature maps are down-sampled at the first layer of each block by convolution with a stride of $2$.
\subsection{AGD Inspired Network}
In line with the analysis above, we introduce the AGD inspired network of (\ref{AGD2_network2}) as follow, which is easy to implement.
\begin{equation}
\z_{k+1}=\sum^k_{j=0}\alpha_{k+1}^jT(\z_j) + \beta\left(\z_k-\sum^k_{j=0}h_{k+1}^j\z_j\right),
\end{equation}
where $T$ is the composite function including batch normalization, ReLU and convolution, following DenseNet. Different from DenseNet, where all preceding layers ($\z_j,j<k+1$) are concatenated first and then mapped by $T$ to produce $\z_{k+1}$, AGD inspired network makes each preceding layer $\z_j$ produce its own output first, and then sum the outputs by weights $\alpha_{k+1}^j$. The weights $\alpha_{k+1}^j$ of the first term, are co-optimized with the network, and the weights $h_{k+1}^j$ of the third term, are calculated by formulation (\ref{h_AGD2}). The parameter $\beta$ is set to be 0.1 in our experiments.

\section{Experiments}
\subsection{Datasets and Training Details}
%We test the vadility of our models on CIFAR-10 and CIFAR-100 datasets and their augmented versions.
\textbf{CIFAR} Both CIFAR-10 and CIFAR-100 datasets consist of $32\times32$ colored natural images. The CIFAR-10 dataset has 60,000 images in 10 classes, while the CIFAR-100 dataset has 100 classes, each of which containing 600 images. Both are split into 50,000 training images and 10,000 testing images. For image preprocessing, we normalize the images by subtracting the mean and dividing by the standard deviation. Following common practice, we adopt a standard scheme for data augmentation. The images are padded by 4 pixels on each side, filled with 0, resulting in $40\times40$ images, and then a $32\times32$ crop is randomly sampled from each image or its horizontal flip.

\textbf{ImageNet} We also test the vadility of our models on ImageNet, which contains 1.2 million training images, 50,000 validation images, and 100,000 test images with 1000 classes. We adopt standard data augmentation for the training sets. A $224\times224$ crop is randomly sampled from the images or horizontal flips. The images are normalized by mean values and standard deviations. We report the single-crop error rate on the validation set.

\textbf{Training Details} For fair comparison, we train our ResNet based models and DenseNet based models using training strategies adopted in the DenseNet paper \cite{huang2016densely}. Concretely, the models are trained by stochastic gradient descent (SGD) with 0.9 Nesterov momentum and $10^{-4}$ weight decay. We adopt the weight initialization method in \cite{he2015delving}, and use Xavier initialization \cite{glorot2010understanding} for the fully connected layer. For CIFAR, we train 300 epoches in total with a batchsize of 64. The learning rate is set to be 0.1 initially, and divided by 10 at 50\% and 75\% of the training procedure. For ImageNet, we train 100 eoches and drop learning rate at epoch 30, 60, and 90. The batchsize is 256 among 4 GPUs.
\begin{table}
\begin{center}
%\footnotesize
\begin{tabular}{l|c|c|c|c}
\hline
Model & CIFAR-10 & CIFAR-100 & CIFAR-10(+) & CIFAR-100(+) \\
\hline\hline
ResNet ($n=9$) & 10.05 & 39.65 & 5.32 & 26.03\\
HB-Net (\ref{HB_network2}) ($n=9$) & 10.17 & 38.52 & 5.46 & 26\\
\hline
ResNet ($n=18$) & 9.17 & 38.13 & 5.06 & 24.71\\
HB-Net (\ref{HB_network2}) ($n=18$) & 8.66 & 36.4 & 5.04 & 23.93\\
\hline
\hline
DenseNet ($k=12,L=40$)$^*$ & 7 & 27.55 & 5.24 & 24.42 \\

AGD-Net (\ref{AGD2_network2}) ($k=12,L=40$) & 6.44 & 26.33 & 5.2 & 24.87\\
\hline
DenseNet ($k=12,L=52$) & 6.05 & 26.3 & 5.09 & 24.33 \\

AGD-Net (\ref{AGD2_network2}) ($k=12,L=52$) & 5.75 & 24.92 & 4.94 & 23.84 \\
\hline
\end{tabular}
\end{center}
\caption{Error rates (\%) on CIFAR-10 and CIFAR-100 datasets and their augmented versions. `*' denotes the result reported by \cite{huang2016densely}. Others are implemented by ourselves. `+' denotes datasets with standard augmentation. We compare AGD-Net and HB-Net with DenseNet and ResNet, respectively, because they have very similar structures.}
\label{result_sec8}
\end{table}

\begin{table}
\begin{center}
%\footnotesize
\begin{tabular}{l|c|c}
          \hline\hline
Model & top-1(\%) & top-5(\%) \\
\hline
ResNet-34 & 26.73 & 8.74 \\
HB-Net-34 & 26.33 & 8.56 \\
\hline
DenseNet-121 & 25.02 & 7.71 \\
AGD-Net-121 & 24.62 & 7.39 \\
\hline\hline
\end{tabular}
\end{center}
\caption{Error rates ($\%$) on ImageNet when HB-Net and AGD-Net have the same depth as their baselines.}\label{xxxx}
\end{table}

\subsection{Comparison with State of The Arts}
%We compare our optimization algorithms inspired network with their backbones on CIFAR-10 and CIFAR-100 dataset for the image classification task, as
The experimental results on CIFAR are shown in Table~\ref{result_sec8}, where the first two blocks are for ResNet based models and the last two blocks are for DenseNet based models. For ResNet based models, we conduct experiments with parameter $n$ of 9 and 18, corresponding to a depth of 56 and 110. For DenseNet based models, we consider two cases where the growth rate $k$ is 12 and the depth $L$ equals to 40 and 52, respectively. We do not consider a larger DenseNet model (\emph{e.g.} $L=100$) due to the memory constraint of our single GPU. We can see that our proposed AGD-Net and HB-Net have a better performance than their respective baseline. For DenseNet based models, when $k=12$ and $L=40$, our model has an improvement on all datasets except for augmented CIFAR-100. When $k=12$ and $L=50$, AGD-Net's superiority is obvious on all datasets. Similar to DenseNet based models, the superiority of HB-Net over ResNet increases as the model capacity goes larger from $n=9$ to $n=18$. Besides, as reported by the ResNet paper \cite{he2015resbet}, when $n=18$, the standard training strategy is difficult to converge and a warming-up is necessary for training. Our reimplementation of ResNet ($n=18$) in Table~\ref{result_sec8} indeed needs repeating the experiments to get a converged result. But for HB-net, we can use exactly the same training strategy adopted by other models and get a converged performance with training only once. Therefore, the training procedure of HB-Net is more stable than original ResNet when the network goes deeper.

As shown in Table~\ref{xxxx}, our proposed structures are also effective on the ImageNet dataset. Both HB-ResNet and AGD-Net has the better performance than their baselines.
All the above experiments show that our design methodology is very promising.
\section{Conclusion and Future Work}
In this paper, we use the inspiration from optimization algorithms to design neural network structures. We propose the hyphothesis that a faster algorithm may inspire us to design a better neural network. We prove that the propagation in the standard feedforward network with the same linear transformation in different layers is equivalent to minimizing some functions using the gradient descent algorithm. Based on this observation, we replace the gradient descent algorithm with the faster heavy ball algorithm and Nesterov's accelerated gradient algorithm to design better network structures, where ResNet and DenseNet are two special cases of our design.

Our methodology is still preliminary and not conclusive as many engineering tricks may also affect the performance of neural networks greatly. Nonetheless, our methodology can serve as the start point of network design. Practitioners can easily make changes to optimization algorithm inspired networks out of various insights and integrate various engineering tricks to produce even better results. Such a practice should be much easier than designing from scratch.

%\acks{Acknowledgements should go at the end, before appendices and references.}

\small
\bibliographystyle{unsrt}
%\nocite{langley00}
\bibliography{network}

\end{document}